\theoremstyle{plain}
\newtheorem{theorem}{Theorem}[section]
\newtheorem{proposition}[theorem]{Proposition}
\theoremstyle{definition}
\newtheorem{definition}[theorem]{Definition}
\theoremstyle{remark}
\begin{document}
\title{ O-ViT: Orthogonal Vision Transformer}

\author{Yanhong~Fei,
        Yingjie~Liu,
        Xian Wei,
        and~Mingsong~Chen
       \IEEEcompsocitemizethanks{
\IEEEcompsocthanksitem 
Yanhong Fei, Xian Wei and  Mingsong Chen    are with the Shanghai Key
Lab of Trustworthy Computing at East China Normal University,
Shanghai, 200062, China (51205902009@stu.ecnu.edu.cn, xian.wei@tum.de, mschen@sei.ecnu.edu.cn). 
Yingjie Liu is with Fujian Institute of Research on the Structure of Matter, Chinese Academy of Sciences.
}}








\vskip 0.3in





\IEEEcompsoctitleabstractindextext{%
\begin{abstract}
Inspired by the tremendous success of the self-attention mechanism in natural language processing, the Vision Transformer (ViT) creatively applies it to image patch sequences and achieves incredible performance. However, the scaled dot-product self-attention of ViT brings about scale ambiguity to the structure of the original feature space. To address this problem, we propose a novel method named
Orthogonal Vision Transformer ({O-ViT}), to optimize ViT from the geometric perspective. O-ViT limits parameters of self-attention blocks to be on the norm-keeping orthogonal manifold, which can keep the geometry of the feature space. Moreover, O-ViT achieves both orthogonal constraints and cheap optimization overhead by adopting a surjective mapping between the orthogonal group and its Lie algebra. We have conducted comparative experiments on image recognition tasks to demonstrate O-ViT's validity and experiments show that O-ViT can boost the performance of ViT by up to 3.6\%.
\end{abstract}
}

\maketitle

\section{Introduction}
Recent years have witnessed ViT taking over the Convolution Neural Network (CNN) and achieving dramatic success in computer vision, such as image classification \cite{touvron2020training,yuan2021tokenstotoken}. 
It benefits from transferring the self-attention mechanism \cite{DBLP:conf/nips/VaswaniSPUJGKP17}, originally applied to language sequences, to vision tasks to learn the internal characteristics of image patch sequences \cite{50650}.
Convolution operations gradually expand the view of the CNN kernel layer by layer. By comparison, the self-attention mechanism allows ViT to obtain the global feature even in shallow layers \cite{50650}.
Nonetheless, linear transformations in the self-attention of ViT bring about  \textit{scale ambiguity} to the structure of the feature space. Besides, the softmax function for normalization has the risk of leading to \textit{gradient vanishing} problems \cite{sun2020gradient}. Both restrict ViT to find the optimal solution or slower its optimization.

This motivates us to explore the optimization of ViT on the orthogonal manifold. To achieve this goal, we put forward a novel method named Orthogonal Vision Transformer ({O-ViT}). Each Matrix $A$ that resides on the orthogonal manifold has the following property \cite{Wang2020OrthogonalCN}:
\begin{equation}
\label{orth_definition}
    A^TA = AA^T = E,
\end{equation}
where $E$ is the identity matrix. On the one hand, orthogonal transformations have numerical stability \cite{DBLP:conf/icassp/LahlouO16} and will not enlarge the gap between data. On the other hand, orthogonal transformations will not compress and stretch the original space, which protects the internal information of the original space from being lost. Furthermore, orthogonal optimizations  have fast convergence and strong robustness. Therefore, O-ViT imposes orthogonal constraints on the query, key, and value weight matrices in the traditional self-attention, alleviating gradient vanishing problems and maintaining the input feature space. 

Optimizing on the  manifold \cite{smith1994optimization_fic} has achieved impressive performance in CNN and Recurrent Neural Network (RNN) \cite{Wang2020OrthogonalCN,2015Unitary}. For instance, \cite{Huang2017ARN,huang2018building} utilize geometry constraints to construct analogous-convolution architecture as to CNN, and Arjovsky et al. \cite{2015Unitary} uses the norm-stable property of orthogonal matrices to alleviate the gradient explosion and vanishing problem. Moreover, orthogonal initializations of parameters yield depth-independent learning times \cite{Saxe2014ExactST}. However, there is no work to conduct orthogonal optimization in ViT, and this paper is the first attempt to bridge the gap between ViT and geometry optimization.

The gradient backpropagation tends to become difficult with the deep learning optimization problem constrained by the manifold structure \cite{smith1994optimization_fic}. Updating trainable parameters along the manifold involves extensive orthogonal projection and retraction operation calculation \cite{Bronstein2021GeometricDL}. ExpRNN \cite{Casado2019CheapOC} creatively adopts a surjective exponential mapping on the Lie group to achieve cheap optimization and orthogonal constraints. Inspired by it, we pay attention to a surjective mapping between the
orthogonal group and its Lie algebra, allowing O-ViT to transform the geometry optimization into the general optimization problem in Euclidean space. 

Another way to achieve cheap optimization is to substitute hard orthogonal constraints for optimizing on the  manifold, which has been seen in orthogonal CNN and RNN \cite{Wang2020OrthogonalCN}. They use the discrepancy between the identity matrix $E$ and the product of parameter $W$ and its transpose, $WW^T - E$, as a penalty term of the main task.  However, the search space of the primary optimization objective does not necessarily intersect with the hard orthogonality constraints. As a result, hard orthogonal constraints may fail to converge to an optimal point that satisfies both main task and orthogonal constraints,  and the proposed  {O-ViT} avoids this insufficient parameterization. This paper makes the following three major contributions:
\begin{enumerate}
    \item 
    We propose a novel method  named O-ViT  to restrict the self-attention space-projection parameters on the orthogonal manifold, which is the first to  improve ViT in a geometric optimization way.
    %
    \item O-ViT can pull the geometric optimization back to the Euclidean optimization. Therefore,  O-ViT can avoid complex orthogonal projection and retraction. As a result, it can be optimized by general gradient descent optimizers. 
    Moreover, O-ViT uses no hard orthogonality constraint.
    \item  We conduct comparative experiments between O-ViT and ViT on well-known datasets, which demonstrate the superiority of  {O-ViT}   over other existing ViTs.
\end{enumerate}

This paper is organized as follows:
Section~\ref{sec:02} introduces the related work.
 Section~\ref{sec:03} details the framework and the orthogonal parameterization of our  {O-ViT}.
Section~\ref{sec:04} conducts comparative experiments to investigate O-ViT's superiority over state-of-the-art ViT models.
Section~\ref{sec:05} concludes this paper and discusses  the future work. 
\section{Background}
The proposed  {O-ViT} combines ViT and the optimization on manifold for deep learning.
\label{sec:02}
\subsection{Vision Transformer (ViT)}  
\label{sec:2.1}
Based on the assumption of translation invariance \cite{touvron2021going,DBLP:conf/cvpr/KayhanG20}, CNN shares and translates one convolution kernel filter to extract local features at different positions in one channel. Getting rid of CNN, ViT takes advantage of the self-attention mechanism \cite{DBLP:conf/nips/VaswaniSPUJGKP17} rather than the above assumption. The essence of self-attention is represented as
\begin{equation}
\begin{aligned}
\label{single-head-attention}
   & Q,K,V = XW_Q, XW_K, XW_V,\\
   & Attention(Q,K,V) = softmax(\frac{QK^T}{\sqrt{d_k}})V, \\
\end{aligned}
\end{equation}
where $X$ is the input feature. $W_Q$, $W_K$, and $W_V$ are trainable linear transformation matrices, which are applied to $X$ to generate the query matrix $Q$, the key matrix $K$, and the value matrix $V$, respectively. $d_k$ is the dimension of $K$.  

ViT first embeds the input image into fixed-size patches and then embeds their positional information named PatchEmbedding and Positional Embedding \cite{fayyaz2021ats}.
Then the scaled dot-product self-attention mechanism, which is served as an encoder, is applied to the embedding for feature extraction. 
The self-attention block measures the correlation between different projection spaces, $Q$ and $K$ 
, and the normalized correlation is applied to $V$ as the attention map.  
Equation~(\ref{single-head-attention}) is also called single-head self-attention and can be improved by the multi-head self-attention \cite{yan2019tener}:
\begin{equation}
\begin{aligned}
& head^{(h)} = Attention(Q^{(h)},K^{(h)},V^{(h)}) =\\ 
& softmax(\frac{XW_Q^{(h)}{(XW_K^{(h)})}^T}{\sqrt{d_k}})XW_V^{(h)},\\
&MultiHead(X) = [head^{(1)};\cdots;head^{(n)}]W_O,
\end{aligned}
\end{equation}
where $n$ is the number of heads, $h$ is the head index, and $[head^{(1)};\cdots;head^{(n)}]$ means concatenating all single heads in the last dimension. Let $d = n \times d_k$, $W_O$ is a learnable parameter of size $\mathbb{R}^{d \times d}$. Furthermore, ViT can be interpreted by a biologically plausible memory model named Sparse Distributed Memory (SDM) \cite{Bricken2021AttentionAS}.  The intersection of hyperspheres adopted by read operations in SDM can approximate the softmax function in ViT. Although it has nothing to do with orthogonality, such geometry interpretation inspires us to rethink ViT from a geometric perspective.


\subsection{Optimization on Manifold for Deep Learning}
Denoting $\mathcal{D}$ as the admissible search space of the parameter $\theta$, most deep learning methods can be abstracted as the following optimization goal:
\begin{equation}
\label{eq:1}
  \mathop{\arg\min}_{\theta \in \mathcal{D}} \ f_{\theta}(x).
\end{equation}
Deep learning optimization is often non-constraint, or frankly, the solution space $\mathcal{D}$ is defined as the Euclidean space. In order to exploit the underlying geometry structure of solutions, optimization problems have developed to be solved on Riemannian manifolds: 
\begin{equation}
\label{eq:2}
  \mathop{\arg\min}_{\theta \in \mathcal{M}} \ f_{\theta}(x).
\end{equation}
where $\mathcal{M}$ denotes the manifold.
Equation~\ref{eq:2} is named Optimization on Manifolds, or Geometric Optimization\cite{smith1994optimization_fic}.
Furthermore, research on geometric optimization in quantum chemistry has also sprung up named Quantum Geometry Optimization (QGOpt), and there also exists a library integrated for QGOpt \footnote{Available from https://qgopt.readthedocs.io/en/latest/}.

A manifold is a collection of objects subject to certain constraints. For instance, the \textit{Stiefel} manifold $St(p, n)$ is a set of matrices $W \in \mathbb{R}^{n \times p} (p \le n)$,
all of which are endowed with the Frobenius inner product \cite{2018Geometry} and subject to $W^TW = I_p$, where $I_p$ denotes the identity matrix of size $\mathbb{R}^{p \times p}$.
Therefore, the optimal solution of geometric optimization should satisfy the constraints determined by specific manifolds. To achieve this goal, there are two steps in the optimization: orthogonal projection and retraction operation \cite{Bronstein2021GeometricDL} (please refer to Figure~\ref{fig:fig1}).

As shown in Figure~\ref{fig:fig1}), there are two nearby points $\theta$ and $\theta^{\prime}$ on a manifold $\mathcal{M}$ together with the tangent space denoted by the blue area. $\theta^{\prime}$ is the next point of $\theta$ in the minimization of objective $f(x;\theta)$.
The tangent space, $T_{\theta} \mathcal{M}$, is a real vector space consisting of all tangent vectors passing through $\theta$.
Figure~\ref{fig:fig1}) shows that $f(x;\theta)$ descents steepest in the direction of $\mathbf{H}$, which is the negative direction of the Riemannian gradient $grad\ f(\theta)$  \cite{hawe2013learning_thesis}.
$grad\ f(\theta)$ is a tangent vector on the tangent space $T_{\theta} \mathcal{M}$. It can be obtained by the orthogonal projection $\Pi$, which projects the gradient at a point $\theta$ from the ambient Euclidean space to the tangent space $T_{\theta} \mathcal{M}$ and can be represented as:
\begin{equation}
    grad\ f(\theta) = \Pi _{T_\theta \mathcal{M}}(\nabla f(\theta)),
\end{equation}
where $\nabla f(\theta)$ represents the Euclidean gradient.

A geodesic is a locally shortest path between two points on the manifold  \cite{hawe2013learning_thesis} and can be uniquely determined by a tangent vector.
The smooth red curve in Figure~\ref{fig:fig1}) denotes a geodesic $\Gamma_{\theta}(\gamma \mathbf{H})$ in the direction of $\mathbf{H}$ with a step size $\gamma$. 
The geometric optimization requires to 
update the point $\theta$ to the point  $\theta^{\prime}$ in a search direction 
$\mathbf{H} \in T_{\theta} \mathcal{M}$ along the curve $\Gamma_{\theta}(\gamma \mathbf{H})$.
Due to the high complexity, the geodesic is approximated by the retraction $\mathfrak{R}_{\theta}(\gamma \mathbf{H}) : T_{\theta}\mathcal{M} \rightarrow \mathcal{M}$ in practice \cite{2018Geometry}, which can map updated parameters from the tangent space back to the manifold.

\begin{figure}[h]
\centering
\includegraphics[width=3.2in]
{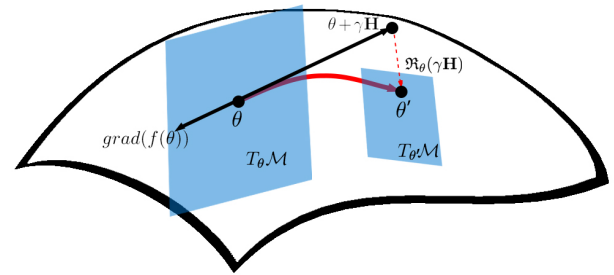}
\caption{Update process in geometric optimization}
\label{fig:fig1}
\end{figure}

\section{Our Proposed Method}
\label{sec:03}
We pay attention to the invariant metric inherited by orthogonal matrices and creatively restrict the linear transformation matrices of self-attention in the BaseViT to reside on the orthogonal manifold. We also explore a computationally economic way to parameterize them. 
We first briefly describe O-ViT's architecture in Section~\ref{3.1}. Then we introduce O-ViT's orthogonality technique in Section~\ref{3.3}
and explain theoretical advantages that support its efficiency in optimization in Section~\ref{3.4}.

\subsection{O-ViT Architecture}
\label{3.1}
O-ViT architecture differs from other ViTs in the design of the self-attention block. Given the input $X$, O-ViT defines an orthogonal self-attention block as
\begin{equation}
\begin{aligned}
    & Q, K, V = X\, h(A^Q), X\, h(A^K), X\, h(A^V),\\
  \end{aligned}
\end{equation}
where $A^Q$, $A^K$ and $A^V$ are all skew-symmetric matrices   \cite{lee2005geometric}, and they can be extended to skew-Hermitian matrices in case of unitary constraints. 
Algorithm~\ref{alg:alg1} and Algorithm~\ref{alg:alg2} present the orthogonalization performed over the self-attention block. 
As seen in the Algorithm~\ref{alg:alg1}, the query, key, and value weight matrices are imposed orthogonal parameterization (refer to line 1 $\sim$ line 3) before projecting input features $X$ to corresponding query, key and value spaces (refer to line 4 $\sim$ line 6).
As the Algorithm~\ref{alg:alg2} shows, the orthogonal parameterization of O-ViT adopts a two-step strategy. Firstly, line 1 transforms an arbitrary weight matrix to a skew-symmetric one. Then line 2 employs $h(X) = 2(E+X)^{-1} - E$ to map it to the orthogonal group. 
The reason why we use skew-symmetric matrices as the transition to realize orthogonal constraints is that there is a special relationship between them, and it will be detailed in the following subsection. 

\begin{algorithm}[h]
\caption{Orthogonal Self-Attention}
\label{alg:alg1}
\textbf{Input}: $X$\\
\textbf{Parameter}: $W_Q,W_K,W_V$\\
\textbf{Output}: $Q,K,V$
\begin{algorithmic}[1]
\STATE $W_Q$ = Parameterization($W_Q$).\\
\STATE $W_K$ = Parameterization($W_K$).\\
\STATE $W_V$ = Parameterization($W_V$).\\
\STATE $Q = XW_Q$.\\
\STATE $K = XW_K$.\\
\STATE $V = XW_V$.\\
\STATE \textbf{return} $Q,K,V$
\end{algorithmic}
\end{algorithm}
\begin{algorithm}[h]
\caption{Parameterization}
\label{alg:alg2}
\textbf{Input}: $W$\\
\textbf{Output}: $W$
\begin{algorithmic}[1]
\STATE $W = W - W^T$ \ //skew\_symmetrize 
\STATE $W =  2(E+W)^{-1} - E$ \ //orthogonalize 
\STATE \textbf{return} $W$
\end{algorithmic}
\end{algorithm}

%
The orthogonal manifold is the collection of matrices $A \in \mathbb{R}^{n \times n} $ that satisfies the constraint in Equation~(\ref{orth_definition}).
According to the definition, orthogonal transformations have the following excellent properties that motivate us to replace general linear projections with orthogonal ones in the traditional self-attention. Consequently, orthogonal constraints are enforced on O-ViT.

\textbf{Inner Product Invariance}
Orthogonal transformations do not change the inner product (refer to Appendix~\ref{appendix:1.1}).
An inner product is fundamental for the vector metrics such as length and angles. Therefore, the inner product invariance property can induce \textbf{length} (refer to Appendix~\ref{appendix:1.2}) and \textbf{angle invariance} (refer to Appendix~\ref{appendix:1.3}).

\textbf{Distance Invariance}
The distance between the point $A$ and point $B$ can be represented by the length of a vector $ d(A,B) = \mid \vec{AB} \mid$.
Since the orthogonal transformation leaves the vector length unchanged, it is guaranteed to maintain the distance between two points.

Now that the length of vectors, the included angle between vectors, and the distance between points remain the same after the orthogonal transformation, it can keep the geometric structure of the original space undestroyed. O-ViT understands the attention map in ViT from a geometric point of view and is conscious of the excellent geometric property of the orthogonal transformation. Therefore, O-ViT parameterizes the three matrices, which are used to transform the input feature and generate the query, key, and value matrices, as orthogonal matrices instead of the general ones in ViT. As a result, the $Q$, $K$, and $V$ produced by O-ViT can collect and retain features as much as possible. Furthermore, $QK^T$, the correlation between the $Q$ and $K$, is calculated upon a feature map with less distortion, and its confidence level is pushed to a higher degree.

\subsection{Orthogonal Parameterization}
\label{3.3}
O-ViT employs skew-symmetric matrices  \cite{Casado2019CheapOC} as the transition to realize the orthogonal constraint. Hence, we first offer the overview of the orthogonal group \cite{Casado2019CheapOC}, skew-symmetric matrices, and their relationship so that they can be converted to each other.
\begin{definition}
The orthogonal group is formally defined as \cite{Casado2019CheapOC}:
\begin{equation}
    O(n) = \{B \in \mathbb{R}^{n \times n} | B^TB = I\}.
\end{equation}
\end{definition}

\begin{definition}
The unitary group is the extension of the orthogonal group $O(n)$ to the complex domain \cite{Casado2019CheapOC}:
\begin{equation}
    U(n) = \{B \in \mathbb{C}^{n \times n} | B^{\ast}B = I\}.
\end{equation}
\end{definition}

The tangent space at the identity element of the Lie  group $G$ is called the Lie algebra $ \mathfrak{g}$ of the group. The Lie algebras of the special orthogonal group and the unitary group are \cite{Casado2019CheapOC}
\begin{equation}
    \begin{aligned}
    \mathfrak{so}(n) &= \{A \in \mathbb{R}^{n \times n} | A\ + A^T = 0\}, \\
    \mathfrak{u}(n) &= \{A \in \mathbb{C}^{n \times n} | A\ + A^{\ast} = 0\},
    \end{aligned}
\end{equation}
which are known as skew-symmetric and skew-Hermitian matrices, respectively.
They are isomorphic to a vector space \cite{Casado2019CheapOC}. 
\begin{proposition}
Any real square matrix $A \in \mathbb{R}^{n \times n}$ can be mapped into a skew-symmetric matrix by $A - A^T$.
\vspace{-0.1in}
\end{proposition}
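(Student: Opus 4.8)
The plan is to verify directly that the matrix $M := A - A^T$ satisfies the defining membership condition of the Lie algebra $\mathfrak{so}(n)$ introduced above, namely $M + M^T = 0$, which is precisely what it means for $M$ to be skew-symmetric. Since the claim is purely algebraic, no geometric, topological, or limiting argument is required; the entire proof reduces to two elementary facts about the transpose map on $\mathbb{R}^{n\times n}$, both of which I would take as given: its linearity, $(X-Y)^T = X^T - Y^T$, and its involutive nature, $(X^T)^T = X$.

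First I would set $M = A - A^T$ and observe that $M$ is again an $n \times n$ real matrix, so it is a legitimate candidate for membership in $\mathfrak{so}(n)$. Then I would compute the transpose of $M$ using the two facts above, which yields
\begin{equation}
M^T = (A - A^T)^T = A^T - (A^T)^T = A^T - A = -(A - A^T) = -M.
\end{equation}
Adding $M$ to both sides gives $M + M^T = 0$, so $M \in \mathfrak{so}(n)$, and hence the image of $A$ under the map $A \mapsto A - A^T$ is skew-symmetric, as claimed. This establishes that the prescription in line~1 of Algorithm~\ref{alg:alg2} indeed produces a valid element of the Lie algebra, which is the property the subsequent orthogonalization step relies upon.

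The main obstacle is essentially nonexistent: the statement is a one-line consequence of the transpose identities. The only point demanding any care is the correct application of the involution property $(A^T)^T = A$ in the middle of the computation, since omitting it would leave the expression stuck at $A^T - (A^T)^T$ and obscure the cancellation. For completeness I might also remark that this map is surjective onto $\mathfrak{so}(n)$ only up to a factor of two, since any skew-symmetric $S$ equals $A - A^T$ for $A = \tfrac{1}{2}S$, but this is not needed for the proposition as stated.
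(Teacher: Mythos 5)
Your proof is correct and follows essentially the same route as the paper: both verify skew-symmetry by direct computation with the transpose identities, the paper writing $(A-A^T)+(A-A^T)^T=(A-A^T)+(A^T-A)=0$ while you equivalently show $(A-A^T)^T=-(A-A^T)$ and add. The extra remarks about linearity, involutivity, and surjectivity up to a factor of two are fine but not needed.
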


\begin{proof}
See Appendix~\ref{appendix:op}.
\end{proof}
By the same token, any complex square matrix $A \in \mathbb{C}^{n \times n}$ can be transformed into a skew-Hermitian matrix.

In the Lie group theory, the exponential mapping $exp: \mathfrak{g} \rightarrow G$  \cite{Casado2019CheapOC} builds correspondence between $\mathfrak{so}(n)$ and its Lie Group $O(n)$. However, the mapping $exp(X) = \Sigma_{n=0}\frac{X^n}{n!}$ is computationally expensive, and the huge number produced by the exponent may induce the gradient vanishing problem in the softmax function. Hence, we use the  map $h: \mathfrak{g} \rightarrow G$  \cite{1953On} to replace it.
 \begin{equation}
   h(X) = 2(E + X)^{-1} - E.
 \end{equation}

\begin{proposition}
The equation $h(X) = 2(E + X)^{-1} - E$ can project any skew-symmetric matrix $X \in \mathbb{R}^{n \times n}$ to the orthogonal group.
\vspace{-0.1in}
\end{proposition}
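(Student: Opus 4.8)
The plan is to recognize $h$ as the \emph{Cayley transform} and verify the defining orthogonality relation $h(X)^T h(X) = E$ by direct computation. First I would rewrite $h$ in product form: since $2(E+X)^{-1} - E = (E+X)^{-1}\bigl[2E - (E+X)\bigr] = (E+X)^{-1}(E - X)$, we obtain $h(X) = (E+X)^{-1}(E-X)$, a form that is far more convenient to manipulate than the original difference.

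Before anything else I must check that $h$ is well-defined, i.e. that $E+X$ is invertible. This is where the skew-symmetry is essential: the eigenvalues of a real skew-symmetric matrix are purely imaginary, so $E+X$ has eigenvalues of the form $1+\lambda$ with $\lambda \in i\mathbb{R}$, none of which can vanish, whence $\det(E+X)\neq 0$. Equivalently, for any real vector $v$ one has $v^T X v = 0$, so $\|(E+X)v\|^2 = \|v\|^2 + \|Xv\|^2 \geq \|v\|^2$, which forces $(E+X)v = 0$ only when $v = 0$; either argument establishes invertibility.

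Next I would compute the transpose. Using $X^T = -X$ together with $(M^{-1})^T = (M^T)^{-1}$, we get $h(X)^T = (E - X)^T\bigl[(E+X)^{-1}\bigr]^T = (E + X)(E - X)^{-1}$. The decisive step is then to form the product $h(X)^T h(X) = (E+X)(E-X)^{-1}(E+X)^{-1}(E-X)$ and simplify. The key algebraic fact is that $E-X$ and $E+X$ are both polynomials in $X$ and hence commute, as do their inverses; consequently $(E-X)^{-1}(E+X)^{-1} = (E+X)^{-1}(E-X)^{-1}$. Substituting this swap lets the factors telescope, $(E+X)(E+X)^{-1}(E-X)^{-1}(E-X) = E$. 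The same reasoning (or simply taking transposes of this identity) yields $h(X)h(X)^T = E$, so $h(X) \in O(n)$, as claimed.

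I expect the only genuine subtlety to be the well-definedness step: one must justify the invertibility of $E+X$ from skew-symmetry rather than silently assume it, and one must invoke the commutativity of the polynomial factors explicitly, since matrix products do not commute in general and the telescoping cancellation relies precisely on that commutativity.
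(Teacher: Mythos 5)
Your proof is correct and follows essentially the same route as the paper's: both rewrite $h(X)$ in the Cayley-transform product form $(E+X)^{-1}(E-X)$, take the transpose using $X^T = -X$, and cancel factors via the (implicit, in the paper's case) commutativity of $E+X$ and $E-X$. The only differences are cosmetic refinements on your part --- you justify the invertibility of $E+X$, which the paper silently assumes, and you verify $h(X)^T h(X) = E$ where the paper computes $h(X)h(X)^T = E$.
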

\begin{proof}
See Appendix~\ref{appendix:op}.
\end{proof}
\begin{theorem}
The equation $h(X) = 2(E + X)^{-1} - E$ is a surjective mapping between the orthogonal group and its Lie algebra \cite{1953On}. For any $Y \in  O(n)$, there exists an  skew-symmetric matrix $X$ that satisfies $ h(X) = Y$.
\vspace{-0.1in}
\end{theorem}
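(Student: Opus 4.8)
The plan is to prove surjectivity constructively by exhibiting, for each target $Y \in O(n)$, an explicit skew-symmetric preimage. Since $h$ is essentially the Cayley transform $h(X) = (E+X)^{-1}(E-X)$ (rewrite $2(E+X)^{-1}-E$ by combining over the common factor $(E+X)^{-1}$), the natural candidate is the formal inverse of this transform. First I would set $X = (E-Y)(E+Y)^{-1}$ and then verify two things: that $X$ lies in the Lie algebra $\mathfrak{so}(n)$, i.e. $X + X^T = 0$, and that $h(X)$ recovers $Y$.

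For the skew-symmetry step, the key observation is that $X$ is a rational function of the single matrix $Y$, so $X$ and $Y$ commute, and likewise $E \pm Y$ and $(E+Y)^{-1}$ all commute with one another. I would then transpose $X = (E+Y)^{-1}(E-Y)$, use the orthogonality relation $Y^T = Y^{-1}$ to eliminate $Y^T$, factor $E - Y^{-1} = -Y^{-1}(E-Y)$ and $E + Y^{-1} = Y^{-1}(E+Y)$, and collect terms. The commutativity of $X$ with $Y$ then collapses the resulting expression to $-X$, establishing $X \in \mathfrak{so}(n)$.

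For the identity $h(X)=Y$, I would substitute the candidate and simplify the two blocks directly: combining over the common factor $(E+Y)^{-1}$ gives $E+X = 2(E+Y)^{-1}$ and $E-X = 2Y(E+Y)^{-1}$. Plugging these into $h(X) = (E+X)^{-1}(E-X)$ and again invoking that $Y$ commutes with $E+Y$ yields $h(X) = (E+Y)\,Y\,(E+Y)^{-1} = Y$, as required.

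The hard part, and the genuine subtlety of the statement, is the invertibility of $E+Y$, which is silently assumed in writing $X=(E-Y)(E+Y)^{-1}$ and in defining $h(X)$; note $E+X = 2(E+Y)^{-1}$, so $E+X$ is invertible exactly when $E+Y$ is. This fails precisely when $-1$ is an eigenvalue of $Y$. In the forward direction the obstruction never arises, since a real skew-symmetric $X$ has purely imaginary eigenvalues and hence $E+X$ is always nonsingular (this is what makes the preceding proposition clean); but in the reverse direction orthogonal matrices carrying the eigenvalue $-1$ (for example $-E$ in even dimension) admit no skew-symmetric preimage. I would therefore state the conclusion honestly as surjectivity onto the set of $Y \in O(n)$ for which $E+Y$ is invertible, a dense open subset of the identity component, and flag that the phrase \emph{for any $Y \in O(n)$} must be read with this restriction.
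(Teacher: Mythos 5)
Your construction is the same one the paper uses: the paper also takes the explicit preimage $X = 2(E+Y)^{-1}-E$ (identical to your $(E-Y)(E+Y)^{-1}$, since $h$ is an involution) and checks $h(X)=Y$ by direct substitution, so the core route is not different. Where you go beyond the paper's text is in two places, and both matter. First, the paper merely asserts that ``$X$ can be proven to be a skew-symmetric matrix''; you actually carry out that verification via $Y^T=Y^{-1}$, the factorizations $E-Y^{-1}=-Y^{-1}(E-Y)$ and $E+Y^{-1}=Y^{-1}(E+Y)$, and commutativity, which is a genuine part of the argument rather than a footnote. Second, and more importantly, your closing caveat identifies a real error in the statement that the paper's proof silently inherits: writing $(E+Y)^{-1}$ presupposes that $-1$ is not an eigenvalue of $Y$, and this fails for honest elements of $O(n)$, e.g.\ any reflection $\mathrm{diag}(-1,1,\dots,1)$ or $-E$ in even dimension. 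Indeed no $Y$ with $\det Y = -1$ can have a skew-symmetric preimage, since such a $Y$ always has $-1$ as an eigenvalue; alternatively, $\mathfrak{so}(n)$ is connected and $h$ is continuous with $h(0)=E$, so the image of $h$ lies entirely in the identity component $SO(n)$, making surjectivity onto the disconnected group $O(n)$ impossible. The map is surjective only onto $\{\,Y\in SO(n): E+Y \text{ invertible}\,\}$, which is exactly the restriction you state; the theorem's phrase ``for any $Y\in O(n)$'' is false as written, and your corrected formulation is the one that should stand.
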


\begin{proof}
See Appendix~\ref{appendix:op}.
\end{proof}

\subsection{From Riemannian to Euclidean Optimization}
\label{3.4}
Manifold optimization belongs to the domain of constrained optimization \cite{DBLP:conf/ijcai/KotaryFHW21}, and the parameters that minimize the optimization objective should satisfy the constraint of Riemannian manifolds in the meantime. In other words, the optimal solution must be searched on the corresponding Riemannian manifold rather than the Euclidean space. 
As introduced in Section~\ref{sec:02}, the gradient calculated in the ambient Euclidean space must be projected to the tangent space. 
Kumar et al. \cite{2018Geometry} introduced constraint  Stochastic Gradient Descent-Momentum (SGD-M) and constraint Root Mean Square Prop (RMSProp) as a counterpart of the regular ones in Euclidean space. Nevertheless, the orthogonal projection and the retraction operation are computationally expensive in geometry optimizers. Our {O-ViT}'s parameterization has such excellent properties that it is a sensible option for geometry optimization.

\textbf{Property 1 :} The optimization of  {O-ViT}  can be transformed into an optimization problem in Euclidean space. 
Let $\theta_B$ represent the trainable parameter subjected to the orthogonal group, the constrained optimization problem
\begin{equation}
\label{opt1}
    \min_{\theta_B \in G} f(x; \theta_B),
\end{equation}
is equivalent to following optimization problem 
\begin{equation}
\label{opt2}
    \min_{\theta_A \in \mathfrak{g}} f(x;\theta_A),
\end{equation}
where $\theta_A$ is a skew-symmetric matrix.
Evidently, an optimal solution $\hat{\theta_B}$ for Equation~(\ref{opt1}) and an  optimal solution $\hat{\theta_A}$
for Equation~(\ref{opt2}) have an equivalent relationship that $\hat{\theta_B} = h(\hat{\theta_A})$,
since the map $h: \mathfrak{g} \rightarrow G$ introduced in Section~\ref{3.3} is surjective. Therefore, if the second problem has a solution, then we will definitely find a solution to the first problem.

\textbf{Property 2 :} Our O-ViT does not create saddle points. $h(X) = 2(E + X)^{-1} - E$ constructs a one-to-one correspondence between the skew-symmetric matrices and the orthogonal group. Provided that the optimization problem stays in its tangent space $\mathfrak{o}(n)$, the parameter update is unique. It implies that our parameterization avoids saddle points.

\textbf{Property 3 :} Our O-ViT can be optimized with Euclidean optimizers. Since the skew-symmetric matrix space is isomorphic to a vector space, Equation~(\ref{opt2}) is actually a non-constrained problem. 
As described in Figure~\ref{fig:fig1}, $\Delta \theta_B = -\gamma \ grad\ f(x;\theta_B)$ is on the tangent space $T_{\theta_B}M$, rather than along the geodesic curve \cite{hawe2013learning_thesis} on the manifold. Therefore, $\theta_B$ should be updated by $\theta_B \mathfrak{R}_{\theta}(\Delta \theta_B)$ rather than $\theta_B + \Delta \theta_B$:
\begin{equation}
   \theta_B^{\prime} \leftarrow \theta_B \mathfrak{R}_{\theta}(-\gamma \ grad\ f(x;\theta_B)).
\end{equation}
Moreover, the equation $\hat{\theta_B} = h(\hat{\theta_A})$ induces:
\begin{equation}
   h(\theta_A^{\prime}) \leftarrow h(\theta_A -\gamma  \nabla(f\circ h)(x;\theta_A)),
\end{equation}
where the gradient $\nabla(f\circ h)$ is defined in Euclidean space.
Therefore, trainable parameters of Equation~(\ref{opt2}) are updated in Euclidean space. 
As a consequence, traditional gradient descent optimizers such as ADAM can be directly used to optimize the orthogonality-constrained  {O-ViT}.

\begin{figure*}[ht]
\vskip 0.2in
\centering
%
\subfigure[SVHN]{
\label{svhn}
\includegraphics[width=0.49\linewidth]{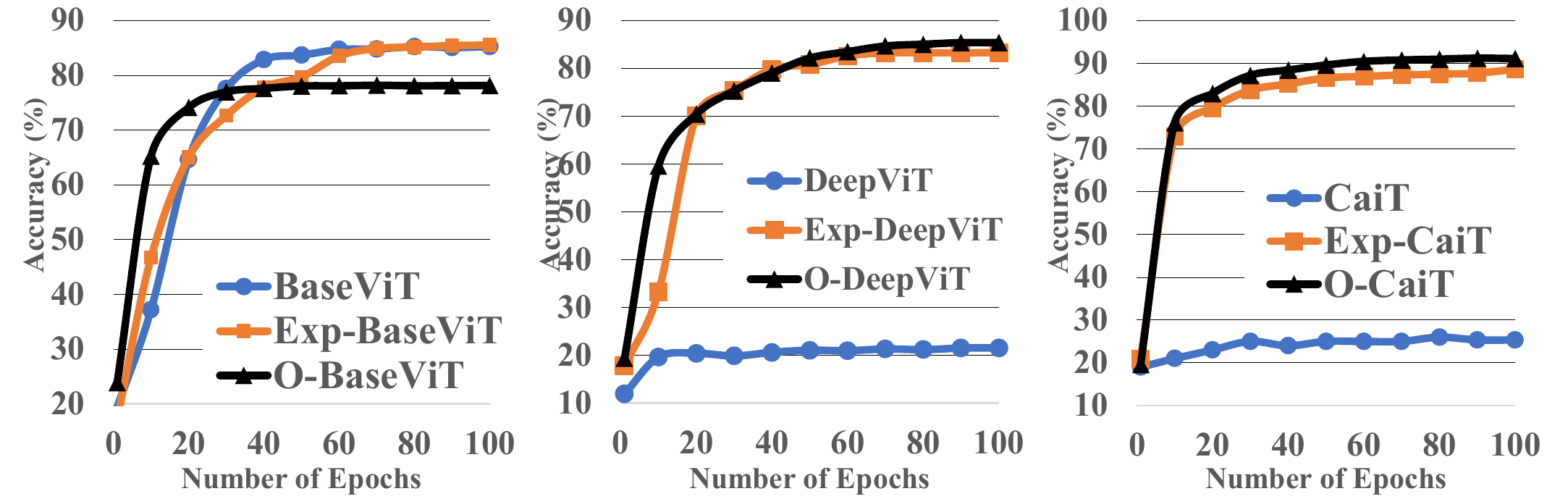}}\hspace{-0.1in}
\subfigure[YALE]{
\label{yale}
\includegraphics[width=0.49\linewidth]{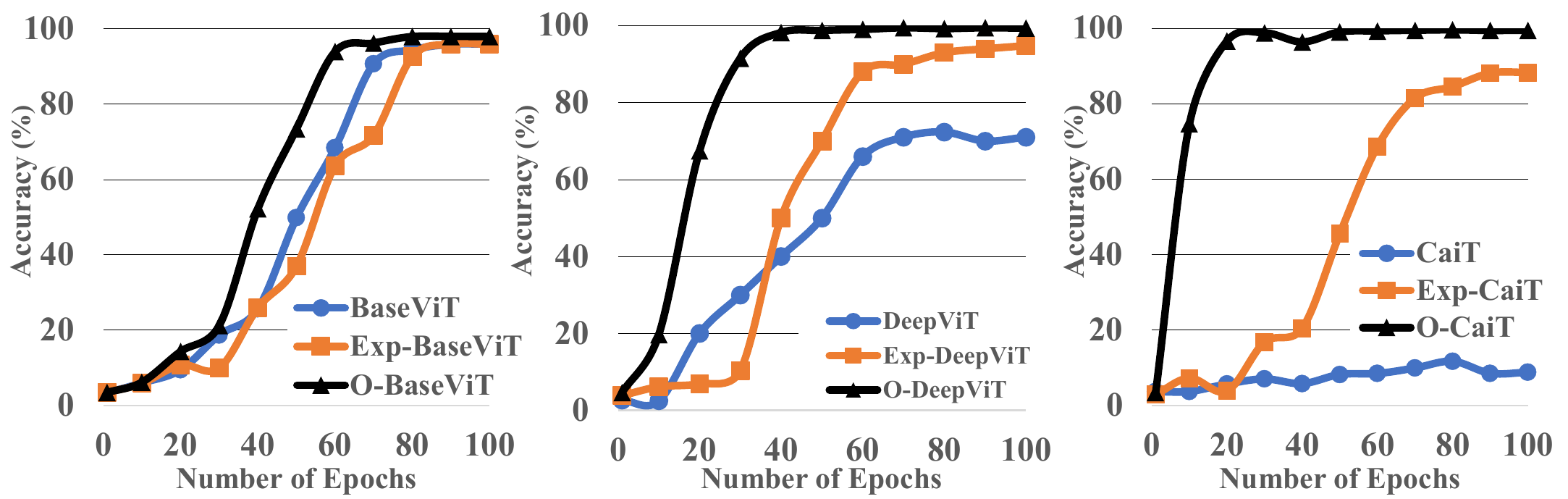}}
\subfigure[CIFAR10]{
\label{cifar10}
\includegraphics[width=0.48\linewidth]{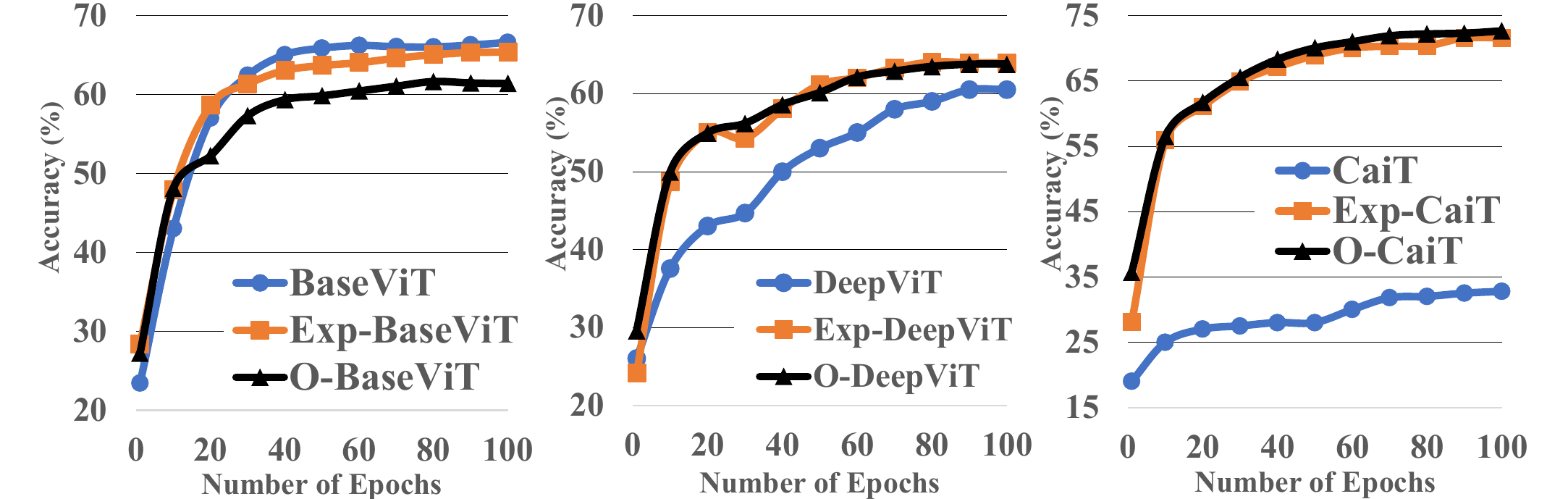}}
\subfigure[CIFAR100]{
\label{cifar100}
\includegraphics[width=0.46\linewidth]{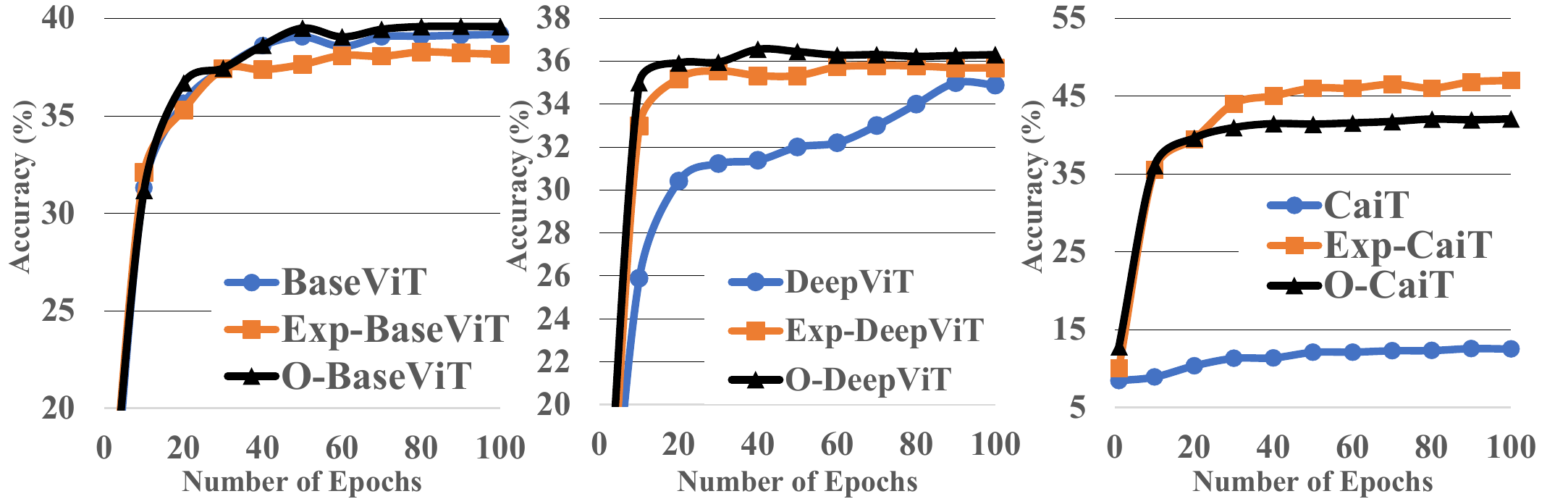}}
\subfigure[Caltech101]{
\label{caltech101}
\includegraphics[width=0.48\linewidth]{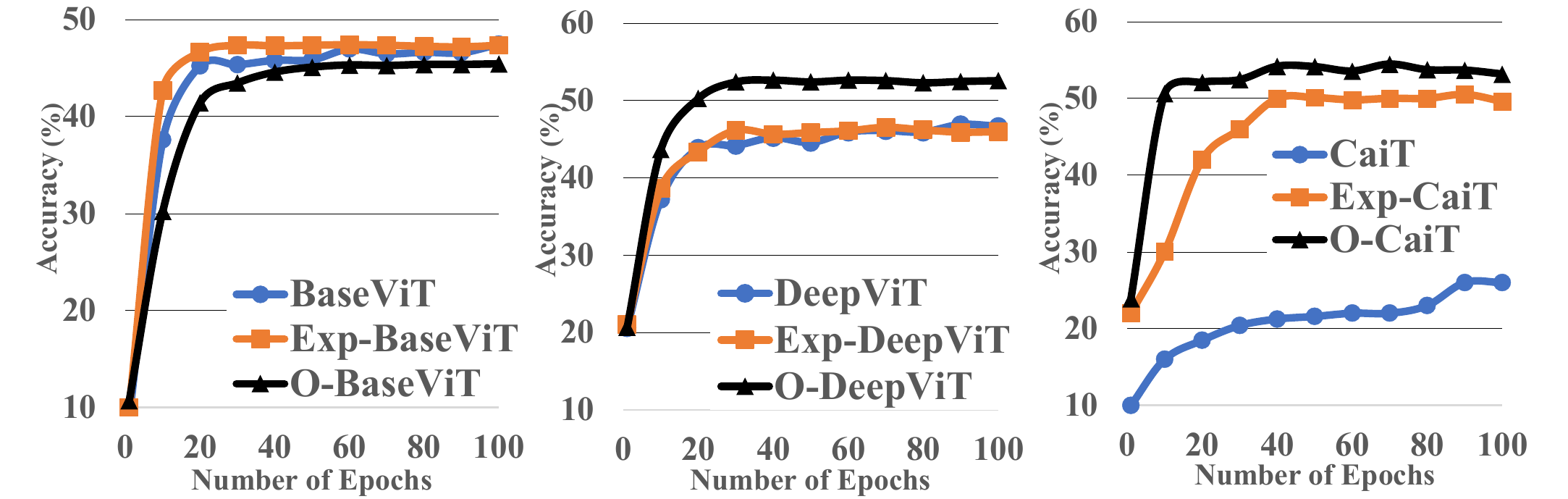}}
\subfigure[ImageNet50]{
\label{ImageNet50}
\includegraphics[width=0.47\linewidth]{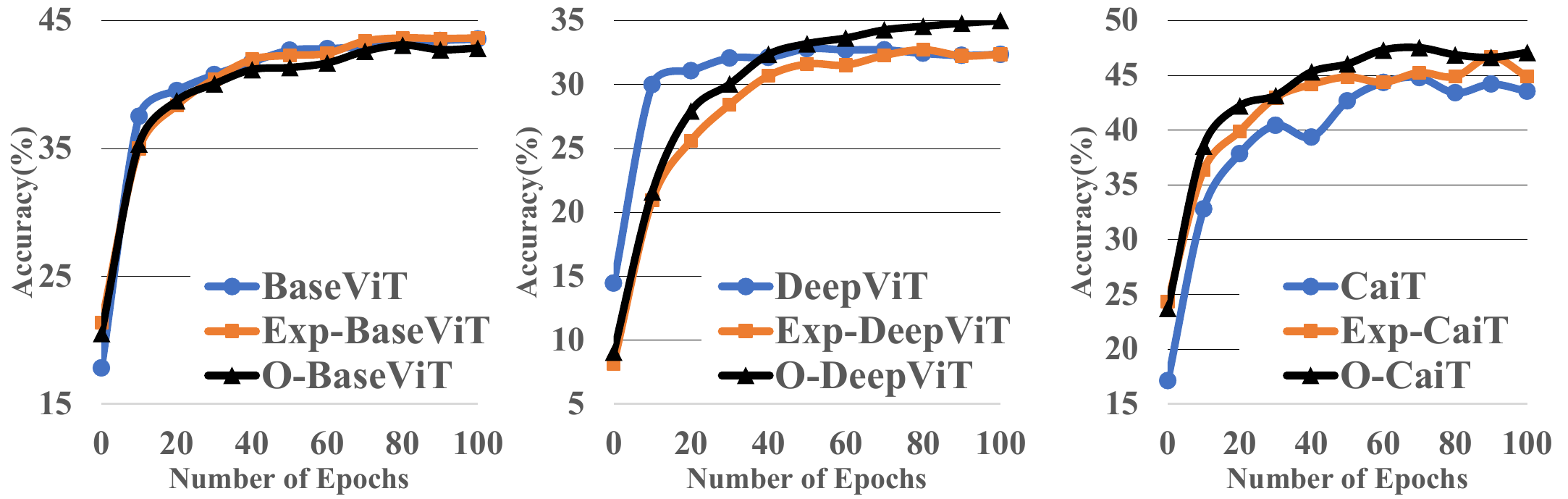}}
\caption{Performance Comparison for Different Datasets}
\label{comparisons}
\end{figure*}

\section{Experimental  Results}
\label{sec:04}
To evaluate the efficiency of our proposed O-ViT, we conducted comparative experiments between  {O-ViT} and ViT on different datasets. We assessed the performance of O-ViT in three aspects: i) under same conditions, the recognition accuracy of  {O-ViT} is higher than ViT  and the convergence of the O-ViT is faster, ii) O-ViT withstands the disturbance of noise better than ViT, and iii) O-ViT can reduce the numbers of parameters while ensuring a credible accuracy.

We used following baselines for comparison: i) BaseViT \cite{50650};
ii)  {Exp}onent-orthogonalized  {BaseViT} (\textit{Exp-BaseViT});
iii)  {O}rthogonal  {BaseViT} parameterized by \textit{our} approach (\textit{O-BaseViT}); 
vi) DeepViT  \cite{zhou2021deepvit};
v)  {Exp}onent-orthogonalized  {DeepViT} (\textit{Exp-DeepViT});
vi)  {O}rthogonal  {DeepViT} parameterized by \textit{our} approach (\textit{O-DeepViT});
vii) CaiT \cite{touvron2021going} 
viii)  {Exp}onent-orthogonalized  {CaiT} (\textit{Exp-CaiT});
 and ix)  {O}rthogonal  {CaiT} parameterized by \textit{our} approach (\textit{O-CaiT}).

We used three benchmarks: BaseViT, DeepViT, and CaiT. 
BaseViT means the most original and fundamental ViT. 
DeepViT and CaiT are variant ViTs, and we used them as a supplement to the orginal ViT.
ExpRNN \cite{lezcano2019trivializations} proposed the exponential mapping (Exp) for orthogonal RNNs, and we are the first to use it on ViT. 
The  {Exp-BaseViT},  {Exp-DeepViT}, and  {Exp-CaiT} methods can be collectively referred to as  {Exp-ViT}s. 
All the approaches beginning with  ``{Exp-}'' and  ``{O-}'' belong to O-ViT.

We implemented our O-ViT on top of the deep learning framework PyTorch. Unless otherwise stated, the reported results were measured in Top-1 Accuracy, and we did not take the Top-5 Accuracy into consideration. We set the same cropped size $32 \times 32$ for input data (except for the ImageNet dataset cropped into a size of $224 \times 224$) and the same hyper-parameters for the neural network for the fair comparison in one control group.  We employed a standard data augmentation strategy: random rotation, crop, and horizontal flip. We used SGD as the optimizer. 
We set the learning rate, the weight decay, the the momentum, as $5.0 \times 10^{-3}$, $7.0 \times 10^{−4}$, $0.9$, respectively.
We performed experiments on PCs with a single Nvidia GTX 3090 GPU.

\subsection{Ablation}
\label{4.1}
We chose various image recognition tasks to evaluate the performance of O-ViT in comparison with ViT. All results are obtained by training for $100$ epochs from scratch. For each image recognition problem, we want to figure out two issues in the ablation stage: i) what is the efficiency of O-ViT compared with ViT, and ii) whether our orthogonal parameterization worked better than other methods of realizing orthogonal constraints.
Moreover, we selected the DeepViT and CaiT as reference points to show the efficiency of O-ViT on deepening the network architecture, since the DeepViT and CaiT involve more than one attention block. Table~\ref{tab1} presents details of ViT benchmarks. We set the patch size of models to be $16 \times 16$ for the ImageNet dataset and $4\times 4$ for other datasets.

\begin{table}[h]
\begin{center}
\begin{small}
\begin{sc}
\caption{ViT Benchmark Configurations}
\label{tab1}
\begin{tabular}{| c |c |c| c|}
\hline
{\bf Parameters}  & {\bf BaseViT} &{\bf DeepViT}  & {\bf  CaiT}\\
\hline
\hline
{\makecell{\bf Self-Attention  Block \#}}&1 &6 &9 \\
\hline
{\bf Hidden size} & 512 & 512 & 512\\
\hline
{\bf MLP size} & 2048 & 2048 & 2048\\
\hline
{\bf Heads} & 12 & 8 & 8\\
\hline
\end{tabular}
\end{sc}
\end{small}
\end{center}
\end{table}

\subsubsection{Character Recognition Problem}
We chose 
the Street View House Number (SVHN) dataset to compare the character recognition accuracy between  {O-ViT}s and non-geometric-parameterized ViTs.
The SVHN dataset is a collection of $73257$ training samples and $26032$ testing samples.
They cover a total of $10$ classes.

Figure~\ref{svhn} plots the classification accuracy vs. epoch for different methods on the SVHN dataset. 
As seen from it, almost all  {O-ViT}s outperform ViTs in terms of classification accuracy, except for the {O-BaseViT}.
It is not much to say that the  {O-DeepViT} and the  {O-CaiT} have significantly surpassed their opponent methods.
%
%
To be more specific, the  {O-DeepViT} and  {O-CaiT} recognize almost 64\% and 66\% more accurately than the DeepViT and CaiT. 
After applying the orthogonal constraint, all of the three methods go to a higher level regarding recognition precision and efficiency. This fully illustrates that taking orthogonal transformations in the self-attention block, rather than general linear transformations, can improve the visual performance of ViT. 
Moreover, we also pay attention to the influence of orthogonal constraints on the depth of ViT.
Since both DeepViT and CaiT are explorations of deepening ViT, the success of imposing orthogonal constraints on them shows the potential of orthogonal parameterization on increasing the depth of the network.

\begin{table*}[h]
\begin{center}
\begin{small}
\begin{sc}
\caption{Top1-Accuracy Comparison Results of Recognition Problems}
\setlength{\tabcolsep}{3mm}
\label{tab2}
\begin{tabular}{|c|c|c|c|c|c|c|}
 \hline 
{\bf Method} & {\bf SVHN}& {\bf YALE}& {\bf CIFAR10} & {\bf CIFAR100}& {\bf Caltech101}& {\bf ImageNet50}\\
 \hline 
 \hline 
{\bf BaseViT} 
& 85.39\% &95.94\%
& {\bf 66.63\% }&39.34\% &{\bf 47.46}\% & 43.76\%\\
{\bf Exp-BaseViT} 
& {\bf 85.54\% }&97.32\% &65.47\%&38.48\% &47.34\% & {\bf 44.0\%}\\
{\bf O-BaseViT} (Ours)
& 78.39\% &{\bf 98.13\%} & 61.75\% &{\bf 39.68\%} &45.59\%& 43.16\%\\
\hline\hline
{\bf DeepViT}
& 21.59\% &72.32\% &60.51\% &34.89\% &47.34\% &32.88\%\\
{\bf Exp-DeepViT}
& 83.33\% &94.82\% &{\bf 64.08\%} &36.43\%&46.78\% & 32.72\%\\
{\bf O-DeepViT} (Ours) 
& {\bf 85.43\%} &{\bf 99.57\%} &63.81\% &{\bf 36.63\%}&{\bf 52.82\%}&{\bf 35.12}\%\\
\hline	\hline
{\bf CaiT}
& 25.35\% &12.01\% &32.77\%&12.01\%&25.99\% &44.80\%\\
{\bf Exp-CaiT}
& 88.63\% & 88.31\% &71.67\%&{\bf46.99}\% &50.51\% &46.68\% \\
{\bf O-CaiT} (Ours)
&{\bf 91.31\%}  &{\bf 99.57}\%&{\bf 72.99\%} &42.25\%&{\bf 54.80\%}&{\bf 48.08\%}\\
 \hline
\end{tabular}
\end{sc}
\end{small}
\end{center}
\end{table*}

From  
Figure~\ref{svhn}, we can also see that  {O-ViT}s recognize the SVHN dataset more accurately than  {Exp-ViT}s, except for the  {O-BaseViT}. 
The above success clearly shows that our orthogonal parameterization outperforms other parameterizations on the character recognition task.

\subsubsection{Face Recognition Problem}
We chose the extended YALE face dataset to evaluate the face recognition accuracy of  {O-ViT}s compared to state-of-the-art ViTs. The YALE dataset, belonging to $38$ categories, contains $2314$ training images and $1874$ testing images obtained under various postures and illumination conditions.

Figure~\ref{yale} shows the classification accuracy vs. epoch plots of a series of methods on the YALE dataset, with and without orthogonal optimization. 
From this figure we can see that, compared with ViTs,  {O-ViT}s have an advantage in terms of recognition accuracy and convergence speed. By imposing orthogonal parameterization, the accuracy of the BaseViT is improved by 2.2\%. Moreover, the effectiveness of orthogonal transformation on retaining the structure of feature space is reflected incisively and vividly in the  {O-DeepViT} and the  {O-CaiT}. It is clear from Figure~\ref{yale} that the recognition accuracy of the  {O-DeepViT} and the  {O-CaiT} is far higher than their non-orthogonal counterparts. The  {O-DeepViT} recognizes 27\% higher than the DeepViT, and the  {O-CaiT} recognizes about seven times more accurate than the CaiT. The above comparative results show that, in addition to the character recognition task, the orthogonal constraint on space-projection parameters also performs well in classifying face images at a precise level. Furthermore, the excellent performance of orthogonal constraints on the DeepViT and the CaiT shows that orthogonal parameterization can alleviate over-fitting and take full advantage of the deep ViT to identify face images accurately.

From  Figure~\ref{yale}, we can also see that the  {O-BaseViT} recognizes the facial images at a higher accuracy level than the  {Exp-BaseViT}. Moreover, the  {O-DeepViT} and the  {O-CaiT} have clear advantages in recognition accuracy over the  {Exp-DeepViT} and  {Exp-CaiT} by nearly 4.7\% and 11\%, respectively. Besides, the  {O-DeepViT} and the  {O-CaiT} converge faster than the exponent-orthogonalized  {Exp-DeepViT} and  {Exp-CaiT}. The above success shows that, our orthogonal parameterization has a crucial advantage over other parameterizations both in accuracy and convergence speed towards the facial recognition task.

\subsubsection{Object Recognition Problem}
We chose the CIFAR10, CIFAR100, Caltech101 and ImageNet datasets to perform comparative experiments on the object recognition problem between  {O-ViT}s and ViTs. Caltech101 consists of $7280$ pictures of objects for training and $1864$ pictures for testing. They cover $101$ classes, plus one background clutter class. Both CIFAR10 and CIFAR100 datasets contain $50000$ training images and $10000$ testing images while the former is divided into $10$ categories, and the latter covers $100$ categories. ImageNet is such a huge article classification dataset that there are altogether $1000$ categories in it. Each category covers $500$ training samples and $100$ testing samples. We selected first $50$ categories as a subset for experiments named ImageNet50.

\begin{table*}[ht]
\begin{center}
\begin{scriptsize}
\caption{Comparison between  {O-ViT}s and ViTs with Noises}
\label{tab3}
\begin{tabular}{|c|c|c|c|c|c|c|c|c|c|c|}
 \hline 
\multirow{2}{*}{ {\bf Method}}& \multicolumn{2}{c|}{\bf {YALE}}&\multicolumn{2}{c|}{\bf {SVHN}}  
& \multicolumn{2}{c|}{\bf CIFAR10} & \multicolumn{2}{c|}{\bf Caltech101} & \multicolumn{2}{c|}{\bf ImageNet50}\\
    \cline{2-11} 
    
& {\bf std = 0.1}& {\bf std = 0.05}& {\bf std = 0.1}
& {\bf std = 0.05}& {\bf std = 0.1}& {\bf std = 0.05}& {\bf std = 0.1} & {\bf std = 0.05}& {\bf std = 0.1} & {\bf std = 0.05}\\
\hline
\hline 
 {\bf BaseViT}  & 65.15\% & 94.88\% & 28.66\% & 52.97\% 
  & 36.65\% &\bf 56.32\% &\bf 36.95\%  &\bf 45.42\%   & \bf 35.40\% & \bf 37.92\%\\
 {\bf O-BaseViT} & \bf 74.07\% & \bf 96.37\%  & \bf 28.97\%  &{\bf 54.30\%} & \bf 40.29\%  & 54.51\%  & 33.73\%  & 36.84\% &35.0\%  & 37.04\%\\
 \hline \hline
 {\bf DeepViT}& 64.30\% & 70.92\% & 19.61\%  &19.56\% & 55.16\%  &59.67\% & 41.92\%  & 43.56\% & 30.72\% & 31.68\% \\
 {\bf O-DeepViT} & {\bf 96.16\%} &{\bf 99.04\%}& {\bf 74.40\%} 
&{\bf 82.31\%} &\bf 56.11\% & \bf 60.65\% & {\bf 51.41\%}&{\bf 52.09\%} &\bf 34.08\% &\bf 34.72\%\\
\hline	 \hline
{\bf CaiT}&3.63\% & 3.74\% & 25.30\% &25.56\%  & 32.96\%  &32.69\% &25.25\% & 25.42\% &43.0\% &43.28\%\\
{\bf O-CaiT} &{\bf 98.72\%} &{\bf 99.31\%} &\bf 71.71\% & \bf 85.70\% & \bf 59.44\%  &\bf 66.75\%&{\bf 53.22\%} &{\bf 54.46\%}&\bf 44.92\% &\bf 46.48\%\\
\hline
\end{tabular}
\end{scriptsize}
\end{center}
\end{table*}

\begin{table*}
\centering
\caption{Comparison between O-ViT and ViT in Classification Accuracy and the Number of Parameters}
\label{paramsAbla}
\begin{threeparttable}
\small
\begin{tabular}{|c| c| c| c| c|}
\hline 
 \multirow{2}{*}{\bf Dataset} &
    \multicolumn{2}{c|}{\bf ViT} & \multicolumn{2}{c|}{\bf O-ViT} \\
    \cline{2-5} 
    & {\bf accuracy [\%]} & {\bf parameters [M]} 
    & {\bf accuracy [\%]} & {\bf parameters [M]}\\
\hline 
\hline 
\textbf{YALE} & 95.94 & 4.32 & \textbf{99.15} &\textbf{2.44} \\
\hline 
\textbf{SVHN} &  78.38 & 2.43 &\bf 80.7 & 2.43  \\
\hline
\textbf{CIFAR10} &  61.21 &  2.69 & \textbf{61.88} &\textbf{2.43}  \\
\hline 
\textbf{CIFAR100} &  39.34 & 3.26  &\bf 39.68 &  3.26  \\
\hline 
\textbf{Caltech101} &  47.46 &  4.37 & \textbf{47.97 }& \textbf{2.48} \\
\hline	
\textbf{ImageNet50} & \bf 43.76 & 2.36 
& {43.16 }& {2.36} \\
\hline
\end{tabular}
\begin{tablenotes}
    \item[1] The notation ``M" represents a unit symbol of one million.
\end{tablenotes}
\end{threeparttable}
\end{table*}

Figure~\ref{cifar10}, Figure~\ref{cifar100}, Figure~\ref{caltech101}, and Figure~\ref{ImageNet50} show the classification accuracy vs. epoch plots of different methods on the object recognition problem. As seen from them, the majority (18 out of 24) of {O-ViT}s outperforms ViTs in the classification accuracy. The comparison between the  {O-CaiT} and CaiT thoroughly reflects the benefits of orthogonal constraints on feature extraction. 
As to CIFAR10 and Caltech101,  {O-CaiT}'s recognition accuracy is about twice that of CaiT.
As to CIFAR100,  {O-CaiT}'s recognition accuracy is more than three times that of CaiT. 
{O-DeepViT} recognizes ImageNet50 dataset more accurately than the DeepViT. Moreover, both the two orthogonal parameterizations,  {Exp-CaiT} and  {O-CaiT}, outperform their non-orthogonal counterpart on the ImageNet50 dataset, which confirms that projecting feature space to the orthogonal manifold can improve the efficiency of feature extraction.

As seen from  Figure~\ref{cifar100},  O-BaseViT and {O-DeepViT} outperform Exp-BaseViT and  {Exp-DeepViT} by a narrow margin.
From  Figure~\ref{caltech101}, we can also see that, the  {OCaiT} recognizes object images more precise than the  {Exp-CaiT}.   Moreover, the  {O-CaiT} converges faster than the  {Exp-CaiT}. The above comparative results shows that, in terms of object recognition task, our orthogonal parameterization achieves better performance than other parameterization approaches both in accuracy and convergence speed.

In summary, the above comparative experiments on different recognition tasks confirm the advantage of   {O-ViT} in terms of classification accuracy and convergence speed over  ViT and other orthogonal parameterizations.
Furthermore, the norm-keeping property of orthogonal matrices help   {O-ViT} increase the depth of the network.
The self-attention mechanism adopts the softmax function to normalize the similarity between the query and key, and exponents in softmax induce zero gradients resulted from very large numbers.
When the zero gradient is transmitted to front layers, the shrinking effects will grow exponentially, yielding the gradient vanishing problem. Parameters are updated in accordance with the direction of gradient descent, hence, the vanishing gradient will inevitably restrict ViT to go deeper.
Orthogonal parameterization can alleviate the above gradient vanishing problem due to its norm-keeing property, thus it can help ViT to go deeper. The success of  {O-DeepViT} and  {O-CaiT} confirms the potential of orthogonal parameterization on increasing the depth of the network.

\subsection{Robustness}
\label{4.2}
To evaluate the robustness of  {O-ViT}s compared to ViTs, we added four kinds of noise to different datasets' testing samples. 
All noises obey the Gaussian distribution with the expected value of $0$. Let $std$ represent the standard deviation of the Gaussian distribution, four Gaussian noises were: i) $std = 0.05$, ii) $std = 0.08$, iii) $std = 0.1$ and iv) $std = 1$. We only show two kinds of noise interference (refer to Table~\ref{tab3}) since space is limited. Please see Appendix~\ref{appendix:5} for the full version.
We employed the recognition accuracy of noise-corrupted images to measure the robustness of methods. Consequently, the higher value of accuracy represents the stronger robustness.

Table~\ref{tab3} shows the comparison between robustness performance between ViTs and  {O-ViT}s on YALE, SVHN, CIFAR10, Caltech101, and ImageNet50 datasets with noises at different intensities. 
We can see that methods with  {O-} prefixes outperform their counterparts in most cases (25 out of 30) considering noise. The above results confirm that orthogonal projections can resist the corruption of input images to a certain extent, which makes O-ViT have stronger robustness than ViT. 
For example, for the SVHN dataset, We can see a sharp increase in the robustness performance of Deep-ViT after imposing orthogonal constraints.
Moreover, the  {O-CaiT} shows obvious advantages over its non-orthogonal counterpart on  YALE and SVHN dataset. 
Table~\ref{tab3} also presents that, for other datasets with noise corruption, OViTs perform better ViTs at least 1\% and up to two times. 

To sum up, methods applied orthogonal constraints(O-ViTs) yield a higher recognition accuracy in majority cases with noise turbulence, which confirms the robustness of the orthogonal parameterization under noises.

\subsection{The Number of Parameters}
\label{4.3}
Table~\ref{paramsAbla} shows the recognition accuracy and the number of trainable parameters of O-ViT and ViT on different datasets. As to recognizing the CIFAR10 dataset, O-ViT is more accurate than ViT by a narrow margin while the number of parameters of O-ViT is slightly smaller than that of ViT with the same depth. As to the YALE and Caltech101 dataset, O-ViT recognizes more accurately than ViT while the number of parameters of O-ViT is nearly half of ViT with the same depth. Orthogonal parameters can reduce redundancy theoretically. The above experiment results confirm that O-ViT can reduce the number of parameters and memory consumption while guaranteeing an acceptable accuracy. 

\section{Conclusion}
\label{sec:05}
ViT makes the application of self-attention go further than natural language processing and performs well on image recognition tasks. However, we observe the scale ambiguity problem in ViT and pay attention to the metric invariance property of orthogonal transformations. 
Therefore, we impose orthogonal constraints on ViT and propose a novel approach, O-ViT, to push the boundaries of the existing ViT in a geometric way. 
Moreover, we use an implementation trick based on classic Lie group theory to simplify the constrained optimization over compact Lie groups, in particular $O(n)$ and $U(n)$. It is of independent interest and could have more applications in combination with other machine learning methods.
Furthermore, we have conducted comparative experiments on different vision recognition tasks to provide abundant practical evidence of O-ViT's excellent performance. Experiments also prove the soundness of O-ViT in deepening the self-attention in ViT.

\balance

\bibliographystyle{IEEEbib}
\bibliography{OViT.bib}

\newpage

\appendices

\section{Properties of Orthogonal Transformations}
\subsection{Inner Product Invariance}
\label{appendix:1.1}
The inner product of vector $x$ and $y$ in the normed space is denoted as $<x,y>$. After applying orthogonal transformation $x^{\prime} = Ax, y^{\prime} = Ay$, the new inner product is
    \begin{equation}
       <Ax,Ay>\ = (Ax)^TAy = x^TA^TAy = x^Ty =\ <x,y>.
    \end{equation}
That is, the inner product remains unchanged after the orthogonal transformation.

\subsection{Length Invariance}
\label{appendix:1.2}
The length of a vector $x$ is denoted as $\mid x \mid$. After imposing the orthogonal transformation, the length becomes:
\begin{equation}
    \mid Ax \mid = \sqrt{(Ax)^TAx} = \sqrt{x^TA^TAx} = \sqrt{x^Tx} =\ \mid x \mid,
\end{equation}
that is, the length of the vector stays the same after the orthogonal transformation.

\subsection{Included Angle Invariance}
\label{appendix:1.3}
The included angle $\theta$ between the vector $x$ and $y$ is called
\begin{equation}
    \theta = \frac{<x,y>}{\mid x \mid\ \mid y \mid}.
\end{equation}
Since the orthogonal transformation keeps the vector inner product and length unchanged,  there is no doubt that it will hold the included angle.

\section{Orthogonal Parameterization}
\label{appendix:op}

\begin{proposition}
\label{appendix:2}
Any real square matrix $A \in \mathbb{R}^{n \times n}$ can be mapped into a skew-symmetric matrix by $A - A^T$.
\end{proposition}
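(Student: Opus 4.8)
The plan is to verify directly that the map $A \mapsto A - A^T$ lands in the skew-symmetric algebra, i.e. that its output satisfies the defining membership condition $M + M^T = 0$ (equivalently $M^T = -M$) for $\mathfrak{so}(n)$ recalled in \secref{3.3}. I would set $M = A - A^T$, so the whole argument reduces to computing the transpose of $M$ and comparing it with $M$ itself.

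First I would invoke the two elementary algebraic properties of the transpose on real matrices: its additivity, $(P + Q)^T = P^T + Q^T$, and its involution property, $(P^T)^T = P$. Applying both to $M = A - A^T$ gives $M^T = A^T - (A^T)^T = A^T - A$, so transposing simply swaps the roles of the two summands and flips the overall sign.

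Next I would add $M$ and $M^T$ to exhibit the cancellation:
\begin{equation}
M + M^T = (A - A^T) + (A^T - A) = 0,
\end{equation}
which is precisely the membership criterion for $\mathfrak{so}(n)$. Hence $M = A - A^T$ is skew-symmetric for every $A \in \mathbb{R}^{n \times n}$, establishing the claim.

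There is no genuine obstacle here: the result is a one-line consequence of the transpose being a linear involution, and the only point requiring any care is tracking the signs so that the two terms cancel rather than double. For completeness I would remark that the identical computation with the conjugate transpose $(\cdot)^{\ast}$ in place of $(\cdot)^T$ shows that $A \mapsto A - A^{\ast}$ carries any $A \in \mathbb{C}^{n \times n}$ into the skew-Hermitian algebra $\mathfrak{u}(n)$, which is the extension needed for the unitary case mentioned immediately after the statement.
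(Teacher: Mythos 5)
Your proof is correct and follows exactly the same route as the paper's: both verify that $(A - A^T) + (A - A^T)^T = (A - A^T) + (A^T - A) = 0$, i.e.\ that the membership condition for $\mathfrak{so}(n)$ holds. Your write-up just makes the transpose properties (additivity and involution) explicit and adds the skew-Hermitian remark, which the paper states without proof immediately after the proposition.
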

\begin{proof}
\begin{equation}
\begin{aligned}
(A - A^T) + (A - A^T)^T = (A - A^T) + (A^T - A) = 0
\end{aligned}
\end{equation}
$\therefore A - A^T$ \ is\ a\ skew-symmetric\ matrix.
\end{proof}

\begin{proposition}
\label{appendix:3}
The equation $h(X) = 2(E + X)^{-1} - E$ can project any skew-symmetric matrix $X \in \mathbb{R}^{n \times n}$ to the orthogonal group.
\end{proposition}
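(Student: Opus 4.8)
The plan is to show that for any skew-symmetric $X$ (so that $X^T=-X$) the matrix $Y := h(X)$ satisfies $Y^T Y = E$, which by Equation~(\ref{orth_definition}) certifies that $Y$ lies in the orthogonal group. Before anything else I would verify that $h$ is well-defined, i.e. that $E+X$ is invertible. This follows directly from skew-symmetry: if $(E+X)v=0$ then $Xv=-v$, so $v^T X v = -|v|^2$; but $v^T X v = (v^T X v)^T = v^T X^T v = -v^T X v$ forces $v^T X v = 0$, whence $|v|^2=0$ and $v=0$. Thus $E+X$ has trivial kernel and is invertible.

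First I would rewrite $h(X)$ in a factored, symmetric form by combining the two terms over the common inverse:
\begin{equation}
h(X) = 2(E+X)^{-1} - E = (E+X)^{-1}\bigl(2E-(E+X)\bigr) = (E+X)^{-1}(E-X),
\end{equation}
which is exactly the Cayley transform of $X$. The central structural observation I would then record is that $E+X$ and $E-X$ are both polynomials in the single matrix $X$, so they commute with one another and with their inverses; this is what will let me reorder factors legitimately.

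Next I would compute the transpose. Using $X^T=-X$ gives $(E-X)^T = E+X$ and $(E+X)^T = E-X$, and together with the rule $(M^{-1})^T = (M^T)^{-1}$ this yields
\begin{equation}
Y^T = \bigl[(E+X)^{-1}(E-X)\bigr]^T = (E+X)\,(E-X)^{-1}.
\end{equation}
Multiplying $Y^T$ by $Y$ and then invoking the commutativity noted above to pair $(E+X)$ with $(E+X)^{-1}$ and $(E-X)$ with $(E-X)^{-1}$ collapses the product:
\begin{equation}
Y^T Y = (E+X)(E-X)^{-1}(E+X)^{-1}(E-X) = E,
\end{equation}
which is the desired orthogonality relation.

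The one genuine subtlety, and the step I would treat most carefully, is this commutativity argument: the expression for $Y^T Y$ is a product of four matrices whose reduction to $E$ depends entirely on the fact that $E+X$, $E-X$, and their inverses mutually commute because all are polynomials in $X$. I would state this explicitly rather than silently permuting factors, since for generic non-commuting matrices the rearrangement would be invalid. Everything else—the invertibility of $E+X$ and the transpose identities—is routine once skew-symmetry is used.
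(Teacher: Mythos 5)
Your proof is correct and follows essentially the same route as the paper's: rewrite $h(X)$ as the Cayley transform $(E+X)^{-1}(E-X)$, take the transpose using skew-symmetry, and collapse the four-factor product to $E$ via the commutativity of $E+X$, $E-X$, and their inverses. You are in fact more careful than the paper, which silently assumes $(E+X)^{-1}$ exists and permutes the factors without comment, whereas you justify the invertibility of $E+X$ and state the commutativity argument explicitly.
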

\begin{proof}
\begin{equation}
\begin{aligned}
&h(X)h^T(X) = 
[2(E + X)^{-1} - E][2(E + X)^{-1} - E]^{T}\\
&= [2(E + X)^{-1} - (E + X)^{-1}(E+X)]
\\&[2(E + X)^{-1} - (E + X)^{-1}(E+X)]^{T}\\
&=(E + X)^{-1} (E-X)(E + X)(E-X)^{-1}\\
& = (E + X)^{-1}(E + X)
= E
\end{aligned}
\end{equation}
$\therefore h(X) = 2(E + X)^{-1} - E$ can map a skew-symmetric matrix to the orthogonal group.
\end{proof}

\begin{theorem}
\label{appendix:4}
The equation $h(X) = 2(E + X)^{-1} - E$ is a surjective mapping between the orthogonal group and its Lie algebra \cite{1953On}. For any $Y \in  O(n)$, there exists an skew-symmetric matrix $X$ that satisfies $ h(X) = Y$.
\end{theorem}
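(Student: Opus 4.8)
The plan is to exhibit an explicit inverse of $h$ and thereby produce, for each target, a skew-symmetric preimage. First I would rewrite the map in closed form. Multiplying through by $(E+X)$ gives
\begin{equation}
h(X) = 2(E+X)^{-1} - E = (E+X)^{-1}\bigl[2E - (E+X)\bigr] = (E+X)^{-1}(E-X),
\end{equation}
so $h$ is the Cayley transform. Given $Y \in O(n)$, the candidate preimage is obtained by solving $(E+X)^{-1}(E-X) = Y$ for $X$; rearranging $(E-X) = (E+X)Y$ yields $X(E+Y) = E - Y$, i.e. $X = (E-Y)(E+Y)^{-1}$. The whole argument then reduces to three checks on this $X$: that it is well-defined, that it is skew-symmetric, and that $h(X) = Y$.

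The cleanest of these is that $h$ really sends $X$ back to $Y$. Substituting the candidate, I would compute
\begin{equation}
E + X = (E+Y)(E+Y)^{-1} + (E-Y)(E+Y)^{-1} = 2(E+Y)^{-1},
\end{equation}
whence $(E+X)^{-1} = \tfrac{1}{2}(E+Y)$ and therefore $h(X) = 2\cdot\tfrac{1}{2}(E+Y) - E = Y$, exactly as required; notice this step needs no skew-symmetry, only that the relevant inverses exist. The second check, that $X$ is skew-symmetric, I would carry out using $Y^T = Y^{-1}$ (orthogonality) together with the fact that $E-Y$ and $E+Y$ commute, both being polynomials in $Y$, so that $X = (E+Y)^{-1}(E-Y)$ as well; transposing and simplifying then gives $X^T = -X$. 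This computation is routine and I would not belabour it.

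The main obstacle, and the genuinely delicate point, is the first check: that $E + Y$ is invertible, which is precisely what is needed for $X$ to be defined. This is where the statement is subtle, because invertibility fails exactly when $-1$ is an eigenvalue of $Y$ (for instance $Y = -E$ gives $E + Y = 0$). Consequently the construction produces a skew-symmetric preimage for every $Y \in O(n)$ with $\det(E+Y)\neq 0$, i.e. for every orthogonal $Y$ with no eigenvalue $-1$, which forms a dense open subset of $SO(n)$ (excluding the eigenvalue $-1$ forces $\det Y = 1$, since the complex eigenvalues pair off on the unit circle). I would therefore either state and prove the result on this natural domain of the Cayley transform, or, if literal surjectivity onto all of $O(n)$ is intended, flag explicitly that it cannot hold on the component $O(n)\setminus SO(n)$ and on any $Y$ carrying $-1$ in its spectrum, where a different (non-Cayley) parameterization would be required. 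In all cases the bulk of the work is the clean inverse computation above, and the only real effort is in justifying the invertibility of $E+Y$.
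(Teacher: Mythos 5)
Your construction coincides with the paper's: the paper also produces the explicit preimage $X = 2(E+Y)^{-1} - E$, which is exactly your $(E-Y)(E+Y)^{-1}$ since $2(E+Y)^{-1} - E = (E+Y)^{-1}\bigl(2E - (E+Y)\bigr) = (E+Y)^{-1}(E-Y)$ and the two factors commute, and it verifies $h(X)=Y$ by precisely the computation you give ($E+X = 2(E+Y)^{-1}$, hence $h(X)=Y$). Where you differ is in rigor, and on both counts you are the more careful. First, the paper only asserts that ``$X$ can be proven to be a skew-symmetric matrix''; your transpose computation using $Y^T = Y^{-1}$ and commutativity supplies the missing argument. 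Second, and more importantly, the ``delicate point'' you flag is a genuine flaw in the paper's statement and proof: the paper writes $(E+Y)^{-1}$ for arbitrary $Y \in O(n)$ with no justification, and none is possible. If $\det Y = -1$, the non-real eigenvalues of $Y$ pair off into conjugate pairs of modulus one, so $Y$ must have an odd number of eigenvalues equal to $-1$; hence $E+Y$ is singular and the construction collapses. In fact no proof of surjectivity onto all of $O(n)$ could exist: for skew-symmetric $X$ one has $\det(E-X) = \det\bigl((E-X)^T\bigr) = \det(E+X)$, so $\det h(X) = \det\bigl((E+X)^{-1}\bigr)\det(E-X) = 1$, meaning the image of $h$ lies entirely in $SO(n)$, and even within $SO(n)$ it misses every matrix with eigenvalue $-1$ (e.g.\ $Y=-E$ for even $n$). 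Your proposed repair --- restricting to the dense open set of orthogonal matrices without eigenvalue $-1$, or explicitly acknowledging that the component $O(n)\setminus SO(n)$ is unreachable --- is the correct statement of the Cayley correspondence and is what the theorem should have said.
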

\begin{proof}
$\forall Y \in O(n)$, we have $X =  2(E + Y)^{-1} - E$ \cite{1953On} satisfies:
\begin{equation}
\begin{aligned}
&    h(X) = h(2(E + Y)^{-1} - E)
  = 2[E + [2(E + Y)^{-1} - E]]^{-1} - E\\ 
& = 2[2(E + Y)^{-1}]^{-1} - E 
 = 2[\frac{1}{2}(E + Y)]- E
 = Y\\
\end{aligned}
\end{equation}
and $X$ can be proven to be a skew-symmetric matrix.
\end{proof}

\section{Comparison between  {O-ViT}s and ViTs Considering Noises}
\label{appendix:5}

Table~\ref{tab4}, Table~\ref{tab5} and Table~\ref{tab6} show the robustness comparison  between  {O-ViT}s and ViTs considering four different kinds of noises, respectively.

\begin{table*}[h]
\vskip 0.15in
\begin{center}
\begin{small}
\begin{sc}
\caption{Comparison between  {O-ViT}s and ViTs with Noises}
\label{tab4}
\begin{tabular}{|c|c|c|c|c|c|c|c|c|}
\hline 
\multirow{2}{*}{ {\bf Method}}&\multicolumn{4}{c|}{\bf SVHN} & \multicolumn{4}{c|}{\bf CIFAR10} \\
    \cline{2-9} 
    
& {\bf std = 1} & {\bf std = 0.1}& {\bf std = 0.08}& {\bf std = 0.05}
& {\bf std = 1} & {\bf std = 0.1}& {\bf std = 0.08}& {\bf std = 0.05}\\
 \hline 
 \hline
{\bf BaseViT}& 9.18\% & 28.66\% &36.16\%
& 52.97\% &\bf 10.98\% & 36.65\% &44.36\%
&\bf 56.32\% \\
{\bf O-BaseViT}   &\bf 9.60\% &\bf 28.97\% &\bf 36.60 \% &{\bf 54.30\%} & 10.65\% &\bf 40.29\% &\bf 45.30 \% & 54.51\%  \\
\hline
\hline
{\bf DeepViT}&{\bf 19.50\%} & 19.61\% &19.58\% &19.56\% & 11.09\% &
55.16\% &57.29\% &59.67\% \\
{\bf O-DeepViT}   & 10.38 \% & {\bf 74.40\%} &{\bf 78.16\%} &{\bf 82.31\%} &\bf 11.11\% &\bf 56.11\% &\bf 58.49\% &\bf  60.65\% \\
\hline	
\hline
{\bf CaiT}&{\bf 18.79\%} & 25.30\% &25.39\% &25.56\% &{\bf 16.41\%} & 32.96\% &32.5\% &32.77\%\\
{\bf O-CaiT}   & 12.32\% &\bf 71.71\%  &\bf 78.59\%&\bf 85.70\% & 12.34\% &\bf 59.44\%  &\bf 62.52 \%&\bf 66.75\% \\
 \hline
 \end{tabular}
 \end{sc}
\end{small}
\end{center}
\vskip -0.1in
\end{table*}

\begin{table*}[h]
\centering
\caption{Comparison between  {O-ViT}s and ViTs with Noises}
\label{tab5}
\small
\begin{tabular}{|c|c|c|c|c|c|c|c|c|}
 \hline 
\multirow{2}{*}{ {\bf Method}}&\multicolumn{4}{c|}{\bf CIFAR100} & \multicolumn{4}{c|}{\bf Caltech101} \\
    \cline{2-9} 
    
& {\bf std = 1} & {\bf std = 0.1}& {\bf std = 0.08}& {\bf std = 0.05}
& {\bf std = 1} & {\bf std = 0.1}& {\bf std = 0.08}& {\bf std = 0.05}\\
 \hline 
 \hline
\textbf{BaseViT}&\bf 1.2\% &\bf 11.49\% &\bf 17.15\%
&{\bf 29.13\%} & 1.13\% &\bf 36.95\% &\bf 41.07\%
&\bf 45.42\% \\
\textbf{O-BaseViT}   & 1.06\% & 10.0\% & 14.83\% & 26.96\%  & {\bf 6.78\%} & 33.73\% & 34.92\% & 36.84\%  \\ 
\hline
\hline
\textbf{DeepViT} &{\bf 1.83\%} & 26.72\% & 29.63\% & 33.42\%  & 3.62\% & 41.92\% & 42.54\% & 43.56\% \\
\textbf{O-DeepViT}   & 1.24 \% & {\bf 29.15 \%} &{\bf 31.72\%} &{\bf 34.09\%} & {\bf 4.86\%} & {\bf 51.41\%} &{\bf 51.64\%} &{\bf 52.09\%} \\
\hline	
\hline
\textbf{CaiT}&{\bf 2.69\%} &  11.74\% & 11.99\% & 11.77\% &  {\bf 16.78\%} & 
25.25\% & 25.37\% & 25.42\%\\
\textbf{O-CaiT}   & 1.74\% &{\bf 34.42\%}  &\bf 37.86\% &\bf 42.09\% & 4.01\% &{\bf 53.22\%}  &{\bf 54.01}\%&{\bf 54.46\%} \\
\hline
\end{tabular}
\end{table*}

\begin{table*}[h]
\vskip 0.15in
\begin{center}

\begin{sc}
\caption{Comparison between  {O-ViT}s and ViTs with Noises}
\label{tab6}
\small
\begin{tabular}{|c|c|c|c|c|c|c|c|c|}
 \hline 
\multirow{2}{*}{ {\bf Method}} & \multicolumn{4}{c|}{\bf YALE}& \multicolumn{4}{c|}{\bf ImageNet50} \\
    \cline{2-9} 
    
& {\bf std = 1} & {\bf std = 0.1}& {\bf std = 0.08}& {\bf std = 0.05}& {\bf std = 1} & {\bf std = 0.1}& {\bf std = 0.08}& {\bf std = 0.05}\\
\hline
 \hline 
 {\bf BaseViT}  & 2.19\% & 65.15\% & 79.94\% & 94.88\% &\bf 3.76\%& \bf 35.40\% &\bf 36.28\% & \bf 37.92\%\\
{\bf O-BaseViT} & {\bf 4.75\%} &\bf 74.07\% &\bf 85.38\% &\bf 96.37\% &3.72\% &35.0\% & 35.96\% & 37.04\% \\
\hline
\hline
{\bf DeepViT} & 4.16\% & 64.30\% & 67.02\% & 70.92\%&\bf 4.16\% & 30.72\% &31.24\% & 31.68\%\\
{\bf O-DeepViT}  &{\bf 5.76 \%} & {\bf 96.16\%} &{\bf 97.28\%} &{\bf 99.04\%} & 3.72\% &\bf 34.08\% &\bf 35.08\% &\bf 34.72\%\\
\hline	
\hline
{\bf CaiT} & 2.78\% &  3.63\% & 3.68\% & 3.74\%&5.88\% &43.0\% & 43.12\% &43.28\%\\
{\bf O-CaiT}\% &{\bf 6.30\%} &{\bf 98.72\%}  &{\bf 99.04}\%&{\bf 99.31\%} &\bf 6.20\% &\bf 44.92\% & \bf 45.60\% &\bf 46.48\%\\
 \hline
\end{tabular}
\end{sc}
\end{center}
\vskip -0.1in
\end{table*}

\end{document}